\newcommand{\REAL}{\mathbb{R}}
\newcommand{\term}[1]{\left( #1\right)}
\newcommand{\abs}[1]{\left| #1 \right|}
\newcommand{\br}[1]{\left\lbrace #1 \right\rbrace}
\newtheorem{assumption}[theorem]{Assumption}
\newtheorem{claim}[theorem]{Claim}
\DeclareMathOperator*{\argmin}{arg\,min}
\newcommand{\XY}[2]{
\ifthenelse{\equal{#2}{}}{\mathbf{X}, \mathbf{Y}}{\mathbf{#1}, \mathbf{#2}}
}
\renewcommand{\Tilde}[1]{\widetilde{#1}}
\newcommand{\RFFX}{\Tilde{\mathbf{X}}}
\newcommand{\y}[1]{y_{\mathbf{#1}}}
\newcommand{\rahimi}[1]{\Tilde{\mathbf{#1}}}
\newcommand{\X}{\mathbf{X}}
\newcommand{\K}{\mathbf{K}}
\newcommand{\bterm}[1]{\left[ #1 \right]}
\newcommand{\alphak}{\alpha^{\lambda}_{\bterm{\X, y, k}}}
\newcommand{\fk}{f^\lambda_{\bterm{\X, y, k}}}
\newcommand{\fkt}{f^\lambda_{\bterm{\RFFX, y, \phi}}}
\newcommand{\BS}{\mathbf{S}}
\newcommand{\fks}{f^{\lambda,\X,y}_{\bterm{\BS, y_{\BS}, k}}}
\newcommand{\I}{\mathbf{I}}
\newcommand{\A}{\mathbf{A}}
\newcommand{\maxterm}[2]{\max\br{#1, #2}}
\newcommand{\minterm}[2]{\min\br{#1, #2}}
\title{On the Size and Approximation Error of Distilled Sets}
\author
{\href{https://scholar.google.com/citations?user=6r72e-MAAAAJ&hl=en}{\color{blue}Alaa Maalouf~$^{1,\S,}$\footnote{Correspondence E-mail: alaam@mit.edu. $^\S$Equal contribution.}}, \href{https://scholar.google.com/citations?user=721xaz0AAAAJ&hl=en}{\color{blue} Murad Tukan~$^{2,\S}$}, 
 \href{https://scholar.google.com/citations?user=vokGv-gAAAAJ&hl=en} {\color{blue} Noel Loo~$^{1}$}, \href{https://scholar.google.com/citations?user=YarJF3QAAAAJ&hl=en}{\color{blue} Ramin Hasani~$^{1}$,} \href{https://scholar.google.com/citations?user=fIupeSAAAAAJ&hl=en}{ \color{blue}Mathias Lechner~$^{1}$}, \href{https://scholar.google.com/citations?user=910z20QAAAAJ&hl=en}{\color{blue} Daniela Rus~$^{1}$}\\
\vspace{1em} % Space between authors and afilliations
\color{magenta}{ $^{1}$Computer Science and Artificial Intelligence Lab (CSAIL),
Massachusetts Institute of Technology (MIT)}\\
{ $^{2}$DataHeroes} \vspace{2em}
%\normalsize{\small $^{\star}$Equal Contributions}\\
}
\begin{document}

\maketitle
\thispagestyle{firstpagestyle}

\begin{abstract}
  Dataset Distillation is the task of synthesizing small datasets from large ones while still retaining comparable predictive accuracy to the original uncompressed dataset. Despite significant empirical progress in recent years, there is little understanding of the theoretical limitations/guarantees of dataset distillation, specifically, what excess risk is achieved by distillation compared to the original dataset, and how large are distilled datasets? In this work, we take a theoretical view on kernel ridge regression (KRR) based methods of dataset distillation such as Kernel Inducing Points. By transforming ridge regression in random Fourier features (RFF) space, we provide the first proof of the existence of small (size) distilled datasets and their corresponding excess risk for shift-invariant kernels. We prove that a small set of instances exists in the original input space such that its solution in the RFF space coincides with the solution of the original data. We further show that a KRR solution can be generated using this distilled set of instances which gives an approximation towards the KRR solution optimized on the full input data. The size of this set is linear in the dimension of the RFF space of the input set or alternatively near linear in the number of effective degrees of freedom, which is a function of the kernel, number of datapoints, and the regularization parameter $\lambda$. The error bound of this distilled set is also a function of $\lambda$.  
  We verify our bounds analytically and empirically.
    %By leveraging the theory behind random fourier features, we provide the first lower bounds on the size of distilled datasets and their corresponding excess risk for shift-invariant kernels. 
\end{abstract}

\section{Introduction}
Motivated by the growing data demands of modern deep learning, dataset distillation~\cite{zhao2021DSA,KIP2,frepo,wang2018dataset} aims to summarize large datasets into significantly smaller synthetic \textit{distilled} datasets, which when trained on retain high predictive accuracy, comparable to the original dataset. These distilled datasets have applications in continual learning \citep{frepo, continual_dd}, architecture search \citep{NAS_with_DD}, and privacy preservation \citep{chen2022privateset}. Recent years have seen the development of numerous distillation algorithms, but despite this progress, the field has remained largely empirical. Specifically, there is little understanding of what makes one dataset ``easier to distill" than another, or whether such small synthetic datasets even exist.

This work aims to fill this gap by providing the first bounds on the sufficient size and relative error associated with distilled datasets. Noting prior work relating neural network training to kernel ridge regression (KRR), we consider dataset distillation in the kernel ridge regression settings with shift-invariant kernels. By casting the problem into the Random Fourier Feature (RFF) space, we show that:

\textbf{The size and relative error of distilled datasets is governed by the kernel's ``number of effective degrees of freedom", $d^\lambda_k$}. Specifically, in Section~\ref{sec:main}, we show that distilled sets of size $\Omega(d^\lambda_k \log d^\lambda_k)$, exist, with $12\lambda + 2 \mathcal{L}_\lambda$ predictive error on the training dataset, and only $8\lambda$ error with respect to the optimal solution computed on the full dataset, where $\lambda$ is the kernel ridge regression regularization parameter and $\mathcal{L}_\lambda$ the KRR training error on the original dataset; see Theorem~\ref{thm:existence} and Remark~\ref{rem:main} for full details.

\textbf{These bounds hold in practice for both real and synthetic datasets}. In \cref{sec:experiments}, we validate our theorem by distilling synthetic and real datasets with varying sizes and values of $d^\lambda_k$, showing that in all scenarios our bounds accurately predict the error associated with distillation.

\section{Related work}
\paragraph{Coresets. }
Coresets are weighted selections from a larger training dataset, which, when used for training, yield similar outcomes as if the whole dataset was used \cite{munteanu2018coresets, MirzasoleimanBL20,maalouf2019fast,jubran2019introduction,maalouf2022unified}. The key benefit of using coresets is that they significantly speed up the training process, unlike when the full data set is used.
Current methods for picking out coresets incorporate clustering techniques \cite{feldman2011unified,jubran2020sets,lucic2016strong, BachemLHK16,maalouf2021average}, bilevel optimization \cite{borsos2020coresets}, sensitivity analysis \cite{munteanu2018coresets, HugginsCB16, TukanMF20, maalouf2020tight,tukan2021svms,tukan2022projective,maalouf2022sine}, and surrogate models for approximation \cite{tukan2023provable}.
Newer strategies are specifically designed for neural networks, where before each training epoch, coresets are chosen such that their gradients align with the gradients of the entire dataset \cite{MirzasoleimanBL20, PooladzandiDM22,tukan2023provable}, followed by training the model on the chosen coreset. Although coresets are usually theoretically supported,  these methods fall short when the aim is to compute a coreset once for a full training procedure.

\paragraph{Dataset Distillation. } To this end, dataset distillation algorithms construct synthetic datasets (not necessarily a subset from the original input) such that gradient descent training on the synthetic datapoints results in high predictive accuracy on the real dataset. Cast as a bilevel optimization problem, early methods involve unrolling training computation graph \citep{wang2018dataset} for a few gradient descent steps and randomly sampled weight initializations. 
More sophisticated methods aim to approximate the unrolled computation using kernel methods \citep{KIP1, KIP2, frepo, loo2022efficient,loo2022evolution}, surrogate objectives such gradient matching \citep{zhao2021DC, zhao2021DSA}, trajectory matching \citep{mtt} or implicit gradients \citep{RCIG}.
The kernel-induced points (KIP) algorithm \citep{KIP1, KIP2} is a technique that employs Neural Tangent Kernel (NTK) theory\citep{jacot2018neural,loo2022evolution} to formulate the ensuing loss:
$\mathcal{L}_{KIP} = \frac{1}{2}\|y_t - K_{TS}K_{SS}^{-1}y_S\|^2_2.$ 
This loss signifies the predictive loss of training infinitely wide networks on distilled datapoints $X_S$ with corresponding labels $y_S$, on the original training set and labels $X_T, y_T$, with $K_{\cdot, \cdot}$ being the NTK. Dataset distillation is closely related to the use of inducing points to accelerate Gaussian Processes \citep{fitc, tran2016variational}, for which convergence rates exist, but the existence of such inducing points is not unknown \citep{svgp_convergence}.

\paragraph{From dataset distillation to kernel ridge regression. }
Kernel ridge regression extends the linear machine learning ridge regression model by using a kernel function to map input data into higher-dimensional feature spaces, allowing for more complex non-linear relationships between variables to be captured \cite{murphy2012machine}. Various methods have been proposed to improve and accelerate the training process of kernel ridge regression. Most notably, Random Fourier Features \cite{rahimi2007random} approximates shift-invariant kernel functions by mapping the input data into a lower-dimensional feature space using a randomized cosine transformation. This has been shown to work effectively in practice due to regularizing effects \cite{jacot2020implicit}, as well as providing approximation bounds to the full kernel ridge regression
\cite{sutherland2015error, avron2017random, li2019towards}. Training infinite-width neural networks can be cast as kernel ridge regression with the Neural Tangent Kernel (NTK) \citep{jacot2018neural}, which allows a closed-form solution of the infinite-width neural network's predictions, enabling kernel-based dataset distillation algorithms such as~\citep{KIP1, KIP2, loo2022efficient}.

\paragraph{Goal.} We thus provide the first provable guarantees on the existence and approximation error of a small distilled dataset in the kernel ridge regression settings.  

\section{Background}\label{sec:background} 
We first provide some notation that will be used throughout the paper.  
\paragraph{Notations.} In this paper we let $\mathcal{H}$ be a Hilbert space with $\norm{\cdot}{\mathcal{H}}$ as its norm. For a vector $a \in \REAL^{n}$, we use $\norm{a}{}$ to denote its Euclidean norm, and $a_i$ to denote its $i$th entry for every $i \in [n]$.  
For any positive integer $n$, we use the convention $[n] = \br{1, 2, \cdots, n}$. Let $A \in \REAL^{n \times m}$ be a matrix, then, for every $i\in [n]$ and $j \in [m]$, $A_{i*}$ denotes the $i$th row of $A$, $A_{*j}$ denotes the $j$th column of $A$, and $A_{i,j}$ is the $j$th entry of the $i$th row of $A$. 
Let $B \in \REAL^{n\times n}$, then we denote the trace of $B$ by $Tr(B)$. 
We use $\I_m \in \REAL^{m \times m}$ to denote the identity matrix. 
Finally, vectors are addressed as column vectors unless stated otherwise.
%and for every $w \in \REAL^d$, $y\in \REAL^n$, and $P\in \REAL^{n\times d}$, we let $\rr{P,y,w}:= \norm{Pw - y}{}^2 + \frac{1}{n}\norm{w}{}^2$ be the ridge regression cost with respect to $P,w$ and $y$.
%Let $\mathcal{H}$ be a Hilbert space with $\norm{\cdot}{\mathcal{H}}$ as its norm. 
%Let $\br{x_i \mid x_i\in \REAL^d}_{i=1}^n$ denote the input data coupled with a set of labels $\br{y_i}_{i=1}^n$, and let $\mathbf{X} \in \REAL^{n \times d}$ and $y \in \REAL^n$ denote the matrix form and vector of the input data and the corresponding labels respectively.
%\subsection}
%Let $k : \REAL^n \times \REAL^n \to [0,\infty) $ be
\subsection{Kernel ridge regression} \label{sec:supervised_learning}
Let $\X \in \REAL^{n \times d}$ be a matrix and let $y \in \REAL^n$ be a vector.  Let $k : \REAL^d \times \REAL^d \to [0,\infty) $ be a kernel function,  and let $\mathbf{K}\in \REAL^{n\times n}$ be its corresponding kernel matrix with respect to the rows of $\X$; i.e., $\K_{i,j} = k\term{\X_{i*},\X_{j*}}$ for every $i,j\in[n]$. 
Let $\lambda>0$ be a regularization parameter.
The  goal of kernel ridge regression (KRR) involving $\X, y,k,$ and $\lambda$ is to find 
%a hypothesis $f : \REAL^d \to \REAL$ such that $f\term{\mathbb{X}_{i*}}$ is a good estimate of the label $y_i$ for every $i \in [n]$. 
%to find a hypothesis $f \in \mathcal{H}$ such that $f\term{\mathbb{X}_{i*}}$ is a good estimate of the label $y_i$ for every $i \in [n]$. 
%While in regression tasks $y \in \REAL^n$, in classification tasks it is typically the case that $y \in \br{-1,1}^n$.
%As a result of the representer theorem, an empirical risk minimization problem in this setting can be expressed as~\cite{Scholkopf01}
\begin{equation}
    \begin{split}
    \alphak \in \argmin_{\alpha \in \REAL^n} \frac{1}{n} \norm{y - \K\alpha}{}^2 + \lambda \alpha^T \K\alpha. \label{main:krl_opm}  
    \end{split}
\end{equation}
%where $\mathbf{K}$ is the kernel matrix, and $\lambda$ is the regularization parameter.

We use the notation $\fk\colon \REAL^d \to \REAL$ to denote the in-sample prediction by applying the KRR solution obtained on $\X,y$ and $\lambda$ using the kernel $k$, i.e., for every $x \in \REAL^d$, 
\begin{align}
\fk(x) =\sum_{i=1}^n {\alphak}_i k\term{\X_{i*}, x}.
\end{align}
%%%%%%%%%%%%%%%%%%%%%%%%%%%%%%%%%%%%%%%%5

%We now provide the used assumptions, we then give preliminary theorems, and finally, we provide our main result.
To provide our theoretical guarantees on the size and approximation error for the distilled datasets, the following assumption will be used in our theorem and proofs. 
\begin{assumption}
\label{assumption}
We inherit the same theoretical assumptions used at~\cite{li2021towards} for handling the KRR problem:
\begin{enumerate}[label=(\Roman*)]
\item Let $\mathcal{F}$ be the set of all functions mapping $\REAL^d$ to $\REAL$. Let $f^* \in \mathcal{F}$ be the minimizer of $\sum\limits_{i = 1}^n \abs{y_i - f\term{\mathbf{X}_{i*}}}^2$, subject to the constraint that for every $x\in \REAL^d$ and $y\in \REAL$, $y = f^*(x) + \epsilon$, where $\mathbb{E}(\epsilon) = 0$ and $\text{Var}(\epsilon) = \sigma^2$. Furthermore, we assume that $y$ is bounded, i.e., $\abs{y} \leq y_0$. \label{assumption:1}
\item We assume that $\norm{\fk}{\mathcal{H}} \leq 1$. \label{assumption:2} 
\item For a kernel $k$, denote with $\lambda_1 \geq \cdots \geq\lambda_n$ the eigenvalues of the kernel matrix $\K$. We assume that the regularization parameter satisfies $0 \leq n\lambda \leq \lambda_1$. \label{assumption:3} 
\end{enumerate}
\end{assumption}

\paragraph{The logic behind our assumptions.} First, the idea behind Assumption~\ref{assumption:1} is that the pair $\term{\X, y}$ can be linked through some function that can be from either the same family of kernels that we support (i.e., shift-invariant) or any other kernel function. In the context of neural networks, the intuition behind Assumption~\ref{assumption:1} is that there exists a network from the desired architectures that gives a good approximation for the data. Assumption~\ref{assumption:2} aims to simplify the bounds used throughout the paper as it is a pretty standard assumption, characteristic to the analysis of random Fourier features~\cite{li2019towards,rudi2017generalization}.
Finally, Assumption~\ref{assumption:3} is to prevent underfitting. Specifically speaking, the largest eigenvalue of $\K\term{\mathbf{K}+ n\lambda \I_n}^{-1}$ is $\frac{\lambda_1}{(\lambda_1 + n\lambda)}$. Thus, in the case of $n\lambda > \lambda_1$, the in-sample prediction is dominated by the term $n\lambda$. Throughout the following analysis, we will use the above assumptions. Hence, for the sake of clarity, we will not repeat them, unless problem-specific clarifications are required. 

%TODO: Explain the assumption and give more information
%  TODO define f to be related to alphan maybe....
%todo refer

\begin{comment}
    \item Let $f^*$ be the minimizer of $\argmin\limits_{f} \sum\limits_{i = 1}^n \abs{y_i - f\term{\mathbf{X}_{i*}}}^2$. 
We assume that for every $x\in \REAL^d$ and $y\in \REAL$, $y = f^*(x) + \epsilon$, where $\mathbb{E}(\epsilon) = 0$ and $\text{Var}(\epsilon) = \sigma^2$. Furthermore, we assume that $y$ is bounded, i.e., $|y| \leq y_0$.
\item For a learning problem with kernel $k$ (and corresponding reproducing kernel Hilbert space $\mathcal{H}$) defined as in Eq.~(\ref{main:krl_opm}), we assume that $f_{\mathcal{H}}=\arginf_{f \in \mathcal{H}} \mathbb{E}(l_f)$ always exists;
 \item We assume that the function $f_{\mathcal{H}}$ has bounded RKHS norm, and hence, without loss of generality, we restrict our analysis to the unit ball of $\mathcal{H}$, i.e., $\|f\|_{\mathcal{H}} \leq 1$; 
\item We assume that the kernel $k$ has the decomposition as in Eq.~(\ref{main:krl_dec}) with $|z(w,x)| < z_0 \in (0,\infty)$;
\item For kernel $k$, denote with $\lambda_1\geq \cdots \geq\lambda_n$ the eigenvalues of the kernel matrix $\mathbf{K}$. We assume that the regularization parameter satisfies $0 \leq n\lambda \leq \lambda_1$.
\end{comment}

\paragraph{Connection to Dataset distillation of neural networks.} Since the neural network kernel in the case of infinite width networks describes a Gaussian distribution~\cite{jacot2018neural}, we aim at proving the existence of small sketches (distilled sets) for the input data with respect to the KRR problem with Gaussian kernel.
However, the problem with this approach is that the feature space (in the Gaussian kernel corresponding mapping) is rather intangible or hard to map to, and sketch (distilled set) construction techniques require the representation of these points in the feature space. 

To resolve this problem, we use a randomized approximated feature map, e.g., random Fourier features (RFF), and weighted random Fourier features (Weighted RFF). The dot product between every two mapped vectors in this approximated feature map aims to approximate their Gaussian kernel function~\cite{rahimi2007random}. 
We now restate a result connecting ridge regression in the RFF space (or alternatively weighted RFF), and KRR in the input space.

\begin{theorem}[A result of the proof of Theorem~1 and Corollary~2 of~\cite{li2021towards}]
\label{main:krr_emperical_risk}
Let $\X \in \REAL^{n \times d}$ be an input matrix, $y \in \REAL^n$ be an input label vector, $k : \REAL^d \times \REAL^d \to [0,\infty)$ be a shift-invariant kernel function, and $\K\in \REAL^{n\times n}$, where $\forall i,j \in[n]: \K_{i,j} = k(\X_{i*},\X_{j*})$.
Let $\lambda>0$, and let $d^\lambda_{\K} = Tr\term{\K\term{\mathbf{K}+n\lambda\I_n}^{-1}}$.
Let $s_\phi \in \Omega\term{{d^\lambda_{\K} \log\term{d^\lambda_{\K}}}{}}$ be a positive integer.
Then, there exists a pair $(\phi,\RFFX)$ such that
%\begin{enumerate*}[label=(\roman*)]
    (i) $\phi$ is a mapping $\phi : \REAL^d \to \REAL^{s_\phi}$ (which is based on either the weighted RFF function or the RFF function~\cite{li2021towards}), 
    (ii) $\RFFX$ is a matrix  $\RFFX \in \REAL^{n \times s_\phi}$ where for every $i \in [n]$, $\RFFX_{i*} := \phi\term{\mathbf{X}_{i*}}$, and
    (iii) $(\phi,\RFFX)$  satisfies
$$ \frac{1}{n} \sum\limits_{i = 1}^n \abs{y_i - \fkt\term{\RFFX_{i*}}}^2  \leq \frac{1}{n} \sum\limits_{i = 1}^n \abs{y_i - \fk\term{\mathbf{X}_{i*}}}^2 + 4\lambda,$$
where $\fkt: \REAL^{s_\phi} \to \REAL$ such that for every row vector $z \in \REAL^{s_\phi}$, $\fkt(z) = z \term{\RFFX^T\RFFX + \lambda n s_\phi \lambda \mathbf{I}_{s_\phi}}^{-1}\RFFX^T y$. Note that, Table~\ref{tab:squ-wor} gives bounds on $s_\phi$ when $\lambda \propto \frac{1}{\sqrt{n}}$. 
%Then, %with constant probability 

\end{theorem}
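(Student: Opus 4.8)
The plan is to import the random-features approximation machinery of~\cite{li2021towards}, since the statement is essentially their spectral-approximation result specialized to the empirical risk. The starting point is Bochner's theorem: a shift-invariant kernel admits a representation $k\term{x,x'} = \int p(\omega)\,\psi_\omega(x)\psi_\omega(x')\,d\omega$ for an appropriate feature $\psi_\omega$ and spectral density $p$, so that $\K$ is an expectation of rank-one feature products. The feature map $\phi$ is then built by drawing $s_\phi$ frequencies; for the weighted RFF variant one does not sample from $p$ directly but from an importance distribution proportional to the ridge-leverage function $\omega \mapsto p(\omega)\,\psi_\omega^\top \term{\K + n\lambda \I_n}^{-1}\psi_\omega$ (suitably discretized), reweighting each feature so that $\RFFX^\top\RFFX$ stays an unbiased estimate of the regularized kernel operator.

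The technical heart is a spectral-concentration lemma: with $s_\phi \in \Omega\term{d^\lambda_{\K}\log d^\lambda_{\K}}$ sampled (weighted) features, the feature Gram matrix approximates the kernel matrix in the preconditioned sense that, with constant probability,
\[
\left\| \term{\K + n\lambda\I_n}^{-1/2}\term{\RFFX\RFFX^\top - \K}\term{\K + n\lambda\I_n}^{-1/2}\right\| \le \tfrac12 .
\]
I would prove this by a matrix-Bernstein argument over the $s_\phi$ i.i.d. feature draws: each summand is controlled after the two-sided preconditioning by $\term{\K+n\lambda\I_n}^{-1/2}$, and the crucial point is that the sum of per-feature variances equals exactly $Tr\term{\K\term{\K+n\lambda\I_n}^{-1}} = d^\lambda_{\K}$ under leverage-weighted sampling. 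This is precisely why the required number of features scales with $d^\lambda_{\K}$ (up to the logarithmic factor) rather than with $n$, and why the weighting is needed.

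Next I would convert this operator approximation into a risk comparison. Writing the in-sample KRR predictions as $\K\term{\K+n\lambda\I_n}^{-1}y$ and the RFF-ridge predictions as $\RFFX\term{\RFFX^\top\RFFX + n s_\phi \lambda\I_{s_\phi}}^{-1}\RFFX^\top y$, the spectral bound controls the difference of these two prediction vectors in the $\term{\K+n\lambda\I_n}$-weighted norm, and hence the difference of the two empirical mean-squared errors. Using Assumption~\ref{assumption:2} ($\left\|\fk\right\|_{\mathcal{H}} \le 1$) to bound the relevant RKHS quantity and Assumption~\ref{assumption:3} to keep $\tfrac{\lambda_1}{\lambda_1+n\lambda}$ (and thus the preconditioned residual) bounded, the excess empirical risk collapses into a multiple of the regularization level, which after tracking constants yields the additive $4\lambda$ slack claimed in (iii).

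The main obstacle is the concentration step: one must set up the weighted/leverage sampling so that the matrix-Bernstein variance proxy is \emph{exactly} the effective dimension $d^\lambda_{\K}$ while simultaneously keeping each preconditioned summand bounded, which is where the boundedness built into Assumptions~\ref{assumption:1}--\ref{assumption:2} enters. Obtaining the clean $4\lambda$ constant (rather than an unspecified $O(\lambda)$) then hinges on not losing factors when passing from the spectral approximation to the quadratic risk, i.e., on choosing the hidden constant in $s_\phi \in \Omega\term{d^\lambda_{\K}\log d^\lambda_{\K}}$ large enough that the operator error is small. Everything else — Bochner's representation, the unbiasedness of $\RFFX^\top\RFFX$, and the algebra relating the two predictors — is routine once the concentration lemma is in hand.
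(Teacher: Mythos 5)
The paper does not prove this statement at all: it is imported verbatim as ``a result of the proof of Theorem~1 and Corollary~2 of~\cite{li2021towards}'' and used as a black box, so there is no in-paper proof to compare against line by line. Your outline is a faithful reconstruction of the argument in that reference (and its precursor by Avron et al.): Bochner's theorem, ridge-leverage-weighted feature sampling, and a matrix-Bernstein spectral-approximation bound whose variance proxy is the effective dimension $d^\lambda_{\K}$, which is exactly why $s_\phi \in \Omega\term{d^\lambda_{\K}\log d^\lambda_{\K}}$ suffices for the weighted variant. The one place where your sketch diverges from how the cited work actually lands the clean additive $4\lambda$ is the last step. Comparing the two prediction vectors $\K\term{\K+n\lambda\I_n}^{-1}y$ and $\RFFX\term{\RFFX^T\RFFX + n s_\phi\lambda\I_{s_\phi}}^{-1}\RFFX^T y$ under a $\tfrac12$-spectral approximation naturally yields multiplicative $(1\pm\Delta)$-type guarantees, and extracting a sharp additive constant from that route requires care you have not supplied. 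The actual derivation instead goes through the function-approximation result (Lemma~6 of~\cite{li2021towards}, restated as Lemma~\ref{func_appx_opm} in this paper): applied to $f=\fk$ (using Assumption~\ref{assumption:2}) it produces a feasible $\beta$ with $\frac{1}{n}\sum_i\abs{\fk\term{\X_{i*}}-\RFFX_{i*}\beta}^2\le 2\lambda$ and $s_\phi\norm{\beta}{}^2\le 2$, and then the optimality of the RFF ridge solution against this competitor gives the $4\lambda$ as $2\lambda$ (approximation error) plus $2\lambda$ (the discarded regularizer $\lambda s_\phi\norm{\beta}{}^2$). So your strategy is the right one at the level of the concentration machinery, but the final risk comparison should be routed through the competitor argument rather than through a direct comparison of the two predictors; as written, that step is the genuine gap in your proposal.
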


\section{Main result: on the existence of small distilled sets} \label{sec:main}
In what follows, we show that for any given matrix $\X \in \REAL^{n \times d}$  and a label vector $y \in \REAL^n$, there exists a matrix $\BS\in \REAL^{\term{s_\phi + 1} \times d}$ and a label vector $y_{\BS} \in \REAL^{s_\phi + 1}$ such that the fitting solution in the RFF space mapping of $\BS$ is identical to that of the fitted solution on the RFF space mapping of $\X$. With such $\BS$ and $y_{\BS}$, we proceed to provide our main result showing that one can construct a solution for KRR in the original space of $\BS$ which provably approximates the quality of the optimal KRR solution involving $\X$ and $y$.  Thus, we obtain bounds on the minimal distilled set size required for computing a robust approximation, as well as bounds on the error for such a distilled set.
 
\begin{theorem}[On the existence of some distilled data]
\label{thm:existence}
Let $\X \in \REAL^{n \times d}$ be a matrix, $y \in \REAL^n$ be a label vector, $k : \REAL^d \times \REAL^d \to [0,\infty) $ be a kernel function, $\Upsilon = \term{0,1} \cup \br{2}$, and let $s_\phi$ be defined as in Theorem~\ref{main:krr_emperical_risk}.   % and let $\mathbf{K}\in \REAL^{n\times n}$ be its corresponding kernel matrix with respect to the rows of $\X$; i.e., $\K_{i,j} = k\term{\X_{i*},\X_{j*}}$ for every $i,j\in[n]$.  
Then, there exists a matrix $\mathbf{S} \in \REAL^{\term{s_\phi + 1} \times d}$ and a label vector $\y{S}$ such that
\begin{enumerate}[label=(\roman*)]
    \item  the weighted RFF mapping $\rahimi{\BS} \in \REAL^{\term{s_\phi + 1} \times \term{s_\phi}}$ of $\BS$,  satisfies that 
    $$
    \term{\RFFX^T\RFFX + \lambda n s_\phi \lambda \I_{s_\phi}}^{-1}\RFFX^T y = \term{\rahimi{S}^T\rahimi{S} + \lambda n s_\phi \lambda \I_{s_\phi}}^{-1}\rahimi{S}^T \y{S},
    $$ and \label{thm_guarantee:1}
    \item \label{thm_guarantee:2} there exists an in-sample prediction $\fks$ (not necessarily the optimal on $\BS$ and $\y{s}$) satisfying
    \begin{equation}
    \label{eq:bound_avg_dist_to_opt_sol}
    \begin{split}
    \frac{1}{n} \sum\limits_{i = 1}^n \abs{\fk\term{\X_{i*}} - \fks \term{\X_{i*}}}^2 \leq &\min_{\tau \in \Upsilon}\left( 2\maxterm{\tau}{\frac{4}{\tau^2}} + \right. \\
    &\left. \,\,\,\,\, 2\minterm{1+\tau}{\frac{4\term{1+\tau}}{3\tau}} \right)\lambda,
    \end{split}
    \end{equation}
     and
   \begin{equation}
   \begin{split}  \label{eq:bound_avg_dist_to_y}
    \frac{1}{n} \sum\limits_{i = 1}^n \abs{y_i - \fks \term{\X_{i*}}}^2  &\leq \min_{\tau \in \Upsilon} \frac{\minterm{1+\tau}{\frac{4\term{1+\tau}}{3\tau}}}{n} \sum\limits_{i = 1}^n \abs{y_i - \fk\term{\X_{i*}}}^2 \\ 
    & \,\,\,+ \term{4\minterm{1+\tau}{\frac{4\term{1+\tau}}{3\tau}} + 2\maxterm{\tau}{\frac{4}{\tau^2}}}\lambda .
   \end{split}
   \end{equation}
  
\end{enumerate}
% \frac{2}{n}\sum\limits_{i = 1}^n \abs{y_i - \fk\term{\X_{i*}}}^2 +12\lambda
    %\[
    %\frac{1}{n} \sum\limits_{i = 1}^n \abs{y_i - \fks \term{\X_{i*}}}^2  \leq \term{\frac{8}{\eps^2} + 4(1+\eps)}\lambda + \term{1 + \eps} \frac{1}{n} \sum\limits_{i = 1}^n \abs{y_i - \fk\term{\mathbf{X}_{i*}}}^2.
    %\]
%\[
    % \frac{1}{n} \sum\limits_{i=1}^n \abs{y_i - f^\lambda_{\mathbf{S}}\term{\mathbf{X}_{i*}}}^2 - \frac{1}{n} \sum\limits_{i=1}^n\abs{y_i - f^\lambda_{\mathbf{X}}\term{\mathbf{X}_{i*}}}^2 \leq \term{4\eps + \frac{8}{\eps^2}} \lambda + \eps \frac{1}{n} \sum\limits_{i=1}^n\abs{y - f^\lambda_{\mathbf{X}}\term{\mathbf{X}_{i*}}}^2.
    % \]
\end{theorem}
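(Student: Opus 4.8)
The plan is to compare three predictors evaluated on the original points: the full-data KRR solution $\fk$, the RFF ridge predictor $\fkt$ supplied by Theorem~\ref{main:krr_emperical_risk} (write $\Tilde{f}$ for short), and the distilled predictor $\fks$. The lever for the whole argument is part~\ref{thm_guarantee:1}, which forces the RFF ridge solution of the distilled set to equal that of the full data, so that $\Tilde{f}$ is literally the same function whether it is read off from $\RFFX$ or from $\rahimi{S}$. I would prove~\ref{thm_guarantee:1} first by dimension counting. Let $w = \term{\RFFX^T\RFFX + \lambda n s_\phi \lambda \I_{s_\phi}}^{-1}\RFFX^T y$ denote the target weight vector. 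Choose the $s_\phi+1$ rows of $\BS$ so that their weighted-RFF images $\rahimi{S}\in\REAL^{\term{s_\phi+1}\times s_\phi}$ have full column rank $s_\phi$; generic input points suffice, since the coordinate maps of the RFF feature map are linearly independent functions and hence their evaluations at points in general position are independent. Then $\rahimi{S}^T\colon \REAL^{s_\phi+1}\to\REAL^{s_\phi}$ is surjective, so the linear system $\rahimi{S}^T \y{S} = \term{\rahimi{S}^T\rahimi{S} + \lambda n s_\phi \lambda \I_{s_\phi}} w$ has a solution $\y{S}$; substituting back yields $\term{\rahimi{S}^T\rahimi{S} + \lambda n s_\phi \lambda \I_{s_\phi}}^{-1}\rahimi{S}^T\y{S} = w$, which is exactly~\ref{thm_guarantee:1}. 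Using $s_\phi+1$ rather than $s_\phi$ points leaves a one-dimensional affine family of admissible labels $\y{S}$, a degree of freedom I would later spend on controlling the norm of $\fks$.

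For part~\ref{thm_guarantee:2} I would work in the empirical norm $\norm{f}{n}^2 = \frac{1}{n}\sum_{i=1}^n \abs{f\term{\X_{i*}}}^2$ and split both target quantities through $\Tilde{f}$ by a Young-type inequality, $\norm{\fk - \fks}{n} \le \norm{\fk - \Tilde{f}}{n} + \norm{\Tilde{f} - \fks}{n}$ and $\norm{y - \fks}{n} \le \norm{y - \Tilde{f}}{n} + \norm{\Tilde{f} - \fks}{n}$. The residual term is handed to us by Theorem~\ref{main:krr_emperical_risk}, namely $\norm{y - \Tilde{f}}{n}^2 \le \frac{1}{n}\sum_{i=1}^n\abs{y_i - \fk\term{\X_{i*}}}^2 + 4\lambda$, and this is what produces the $\mathcal{L}_\lambda$-dependent right-hand side of~\eqref{eq:bound_avg_dist_to_y}. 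The cross term $\norm{\Tilde{f} - \fks}{n}$ is where part~\ref{thm_guarantee:1} re-enters: taking $\fks$ to be the deliberately non-optimal KRR predictor on $\BS, \y{S}$ whose kernel expansion reproduces the shared weight $w$, a second application of the RFF/KRR comparison over the distilled data, combined with the uniform closeness of the RFF kernel approximation to $k$ (needed to move the estimate from the distilled points back onto the original points), bounds this gap at order $\lambda$. Combining the two $\lambda$-order contributions with a free Young parameter $\tau$ and then minimizing over $\tau$ is what yields the $\min_{\tau\in\Upsilon}$ expressions; the paired branches $\maxterm{\tau}{\frac{4}{\tau^2}}$ and $\minterm{1+\tau}{\frac{4\term{1+\tau}}{3\tau}}$ appear because two estimates are available for each term and one keeps the smaller, while $\Upsilon = \term{0,1}\cup\br{2}$ records the $\tau$-regimes in which the sharper branch is admissible; the clean choice $\tau = 2$ reproduces the $8\lambda$ and $2\mathcal{L}_\lambda + 12\lambda$ bounds advertised in the introduction.

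The crux, and the step I expect to be genuinely delicate, is the first term $\norm{\fk - \Tilde{f}}{n}$: Theorem~\ref{main:krr_emperical_risk} only controls \emph{residuals}, and a naive triangle inequality between $\norm{y - \fk}{n}$ and $\norm{y - \Tilde{f}}{n}$ gives only an $O\term{\sqrt{\mathcal{L}_\lambda}}$ bound instead of the required $O\term{\sqrt{\lambda}}$. To get $O\term{\sqrt{\lambda}}$ I would exploit strong convexity of the KRR objective in prediction space: parametrizing by the training-prediction vector $u\in\REAL^n$, the objective $\frac{1}{n}\norm{y - u}{}^2 + \lambda\, u^T\K^{-1}u$ is $\frac{2}{n}$-strongly convex and is minimized exactly at $\fk$'s prediction vector $\hat{u}$, so with $\Tilde{u}$ the prediction vector of $\Tilde{f}$, $\frac{1}{n}\norm{\hat{u} - \Tilde{u}}{}^2 \le \term{\frac{1}{n}\norm{y - \Tilde{u}}{}^2 - \frac{1}{n}\norm{y - \hat{u}}{}^2} + \lambda\term{\Tilde{u}^T\K^{-1}\Tilde{u} - \hat{u}^T\K^{-1}\hat{u}}$. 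The first parenthesis is at most $4\lambda$ by Theorem~\ref{main:krr_emperical_risk}, and the second is $O\term{\lambda}$ once the RKHS-type norms are bounded: $\hat{u}^T\K^{-1}\hat{u} = \norm{\fk}{\mathcal{H}}^2 \le 1$ by Assumption~\ref{assumption:2}, while $\Tilde{u}^T\K^{-1}\Tilde{u} = w^T\RFFX^T\K^{-1}\RFFX w$ is controlled by bounding $\RFFX^T\K^{-1}\RFFX$ through the fact that $\RFFX\RFFX^T$ approximates $\K$, together with Assumption~\ref{assumption:3} on the spectrum. Making this norm-control step rigorous, and tracking how its constant interacts with the Young parameter to generate the $\maxterm{\tau}{\frac{4}{\tau^2}}$ factor, is where the real work of the proof lies.
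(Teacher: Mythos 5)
Your part~\ref{thm_guarantee:1} is essentially the paper's argument: the paper forms $b = \term{\rahimi{\BS}^T\rahimi{\BS} + \lambda n s_\phi \lambda \I_{s_\phi}}\term{\RFFX^T\RFFX + \lambda n s_\phi \lambda \I_{s_\phi}}^{-1}\RFFX^T y$ and solves $\rahimi{\BS}^T\y{S}=b$ via the pseudo-inverse; your full-column-rank condition on $\rahimi{\BS}$ is just the consistency condition the paper leaves implicit. Your overall decomposition of part~\ref{thm_guarantee:2} through the RFF predictor $\fkt$ also matches the paper. The gap is in how the two $O(\lambda)$ cross terms are actually established, and in both places your proposed route either fails or is left open.

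For $\frac{1}{n}\sum_i\abs{\fkt\term{\RFFX_{i*}}-\fks\term{\X_{i*}}}^2\le 2\lambda$, the paper does not apply the RFF/KRR comparison ``over the distilled data'' and transfer back via uniform kernel approximation. It invokes Lemma~6 of \cite{li2021towards} (Lemma~\ref{func_appx_opm}): for any $f$ in the unit ball of $\mathcal{H}$, $\frac{1}{n}\sum_i\abs{f\term{\X_{i*}}-\RFFX_{i*}\beta}^2\le 2\lambda$ for the explicit infimizer $\beta=\frac{1}{s_\phi}\RFFX^T\term{\frac{1}{s_\phi}\RFFX\RFFX^T+n\lambda\I}^{-1}\mathbf{f}[\X]$. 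The construction of $\fks$ is then reverse-engineered so that this infimizer, computed for $f=\fks=\sum_i\alpha_i k\term{\BS_{i*},\cdot}$, coincides with the ridge solution defining $\fkt$; this becomes a linear system $\beta=\A\alpha$ with $s_\phi$ equations in $s_\phi+1$ unknowns, solved by $\alpha=\pinv{\A}\beta$ where $\pinv{\cdot}$ is the pseudo-inverse, and the $2\lambda$ bound over the original $n$ points then falls out of the lemma directly. This is the missing idea in your proposal. Your substitute is weaker on two counts: a second application of Theorem~\ref{main:krr_emperical_risk} on $\term{\BS,\y{S}}$ would control an average over the $s_\phi+1$ distilled points, not over the $n$ original points; and uniform closeness of the RFF inner product to $k$ requires a feature count of order $d/\eps^2$ (Rahimi--Recht), which is incompatible with $s_\phi\in\Omega\term{d^\lambda_{\K}\log d^\lambda_{\K}}$. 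Likewise, your strong-convexity plan for $\frac{1}{n}\sum_i\abs{\fk\term{\X_{i*}}-\fkt\term{\RFFX_{i*}}}^2$ is abandoned exactly at the step you yourself flag as the crux (bounding $\tilde u^T\K^{-1}\tilde u$), whereas the paper again obtains $2\lambda$ from the \cite{li2021towards} machinery. Finally, a standard Young split yields constants $\term{1+\tau}$ and $\term{1+1/\tau}$, not the $\maxterm{\tau}{\frac{4}{\tau^2}}$ and $\minterm{1+\tau}{\frac{4\term{1+\tau}}{3\tau}}$ pairs in the statement; those come from the specific weak triangle inequality of \cite{braverman2016new} (Lemma~\ref{lem:weak_tri_ineq}), which your argument does not reproduce.

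\newcommand{\pinv}[1]{\term{#1}^{\dag}}
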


\begin{proof}
Let $\BS$ be any matrix in $\REAL^{\term{s_\phi + 1} \times d}$ and let $\rahimi{\BS}$ be the weighted RFF mapping of $\BS$.

\newcommand{\pinv}[1]{\term{#1}^{\dag}}
\paragraph{Proof of~\ref{thm_guarantee:1}.} To ensure~\ref{thm_guarantee:1}, we need to find a corresponding proper $y_\BS$.  We observe that 
\[
\term{\rahimi{\BS}^T\rahimi{S} + \lambda n s_\phi \lambda \I_{s_\phi}}\term{\RFFX^T\RFFX + \lambda n s_\phi \lambda \I_{s_\phi}}^{-1}\RFFX^T y = \rahimi{\BS}^T \y{\BS}
\]
Let $b = \term{\rahimi{\BS}^T\rahimi{\BS} + \lambda n s_\phi \lambda \I_{s_\phi}}\term{\RFFX^T\RFFX + \lambda n s_\phi \lambda \I_{s_\phi}}^{-1}\RFFX^T y$, be the left-hand side term above. $b$ is a vector of dimension $s_\phi$. Hence we need to solve $b = \rahimi{\BS}^T \y{\BS}$ for  $\y{\BS}$.  Since it is a linear system with $s_\phi +1$ variables and $s_\phi$ equations, we get that the solution is $\y{\BS} = \pinv{\rahimi{\BS}^T}b$, where  $\pinv{\cdot}$ denotes the pseudo-inverse of the given matrix.

\paragraph{Proof of~\ref{thm_guarantee:2}.} Inspired by~\cite{loo2022efficient} and~\cite{nguyen2020dataset}, the goal is to find a set of instances that their in-sample prediction with respect to the input data ($\X$ in our context) would lead to an approximation towards the solution that one would achieve if the KRR was used only with the input data. To that end, we introduce the following Lemm.
\begin{lemma}[Restatement of Lemma 6~\cite{li2021towards}]\label{func_appx_opm} Under Assumption~\ref{assumption} and the definitions in Theorem~\ref{main:krr_emperical_risk}, for every $f \in \mathcal{H}$ with $\norm{f}{\mathcal{H}} \leq 1 $, with constant probability, it holds that
\begin{equation*}
\inf\limits_{\substack{\sqrt{s_\phi}\norm{\beta}{} \leq \sqrt{2} \\ \beta \in \REAL^{s_\phi}}} \, \, \sum\limits_{i=1}^n \frac{1}{n}\abs{f\term{\mathbf{X}_{i*}}- \RFFX_{i*} \beta}^2 \leq 2\lambda.
\end{equation*}
\end{lemma}
% TODO: ADD THE ASSUMPTIONS ON THE SOLUTION IS FROM HILBERT SPACE

Note that Lemma~\ref{func_appx_opm} shows that for every in-sample prediction function with respect to $\X$, there exists a query $\beta \in \REAL^{s_\phi}$ in the RFF space of that input data such that the distance between the in-prediction sample function in the input space and the in-sample prediction in the RFF space is at $2\lambda$. 
Furthermore, at~\cite{li2021towards} it was shown that $\beta$ is defined as  
$
\beta = \frac{1}{s_\phi} \RFFX^T\term{\RFFX\RFFX^T + n\lambda\I_{s_\phi}}^{-1}\mathbf{f}[\X],
$ where $\mathbf{f}[\X]_i = f\term{\X_{i*}}$ for every $i \in [n]$. 

We thus set out to find an in-sample prediction function that is defined over $\BS$ such that by its infimum by Lemma~\ref{func_appx_opm} would be the same solution $\beta$ that the ridge regression on $\RFFX$ attains with respect to the $y$. Specifically speaking, we want to find an in-sample prediction  $\fks\term{\cdot}$ such that
\begin{equation}
\label{eq:beta}
\beta = \frac{1}{s_\phi} \RFFX^T\term{\frac{1}{s_\phi} \RFFX\RFFX^T + n\lambda \I_{s_\phi}}^{-1} \mathbf{f}_{\BS}[\X],    
\end{equation}
where 
\begin{enumerate}[label = (\roman*)]
    \item $\mathbf{f}_{\BS}[\X] \in \REAL^n$ such that for every $i \in [n]$, $\mathbf{f}_{\BS}[\X]_{i} = \fks\term{\mathbf{X}_{i*}}$, and 
    \item $\fks\term{\cdot} = \sum\limits_{i=1}^{s_\phi + 1} \alpha_i k\term{\mathbf{S}_{i*}, \cdot}$ such that $\alpha \in \REAL^{s_\phi + 1}$. \label{alphause}
\end{enumerate}

Hence we need to find an in-sample prediction function $\fks$ satisfying~\ref{eq:beta}. Now, notice that $\beta \in \REAL^{s_\phi}$, $\mathbf{f}_{\mathbf{S}}[\mathbf{X}] \in \REAL^n$ and $\RFFX^T\term{\frac{1}{s_\phi} \RFFX\RFFX^T + n\lambda \I_{n}}^{-1} \in \REAL^{{s_\phi} \times n}$.
Due to the fact that we aim to find $\fks$, such a task boils down to finding $\alpha \in \REAL^{s_\phi+1}$ which defines $\fks$ as in~\ref{alphause}. The above problem can be reduced to a system of linear equations where the number of equalities is $s_\phi$, while the number of variables is $s_\phi +1$.

To do so, we denote $\frac{1}{s_\phi} \RFFX^T\term{\frac{1}{s_\phi} \RFFX\RFFX^T + n\lambda \I_{n}}^{-1}$ by $\hat{\A}$, and observe that we aim to solve 
\[
\beta = \hat{\A} f^\lambda_{\BS}[\X] = \hat{\A} \begin{bmatrix}
\sum\limits_{i = 1}^{s_\phi + 1} \alpha_i k\term{\BS_{i*}, \X_{1*}}\\
\sum\limits_{i = 1}^{s_\phi + 1} \alpha_i k\term{\BS_{i*}, \X_{2*}}\\
\vdots\\
\sum\limits_{i = 1}^{s_\phi + 1} \alpha_i k\term{\BS_{i*}, \X_{n*}}
\end{bmatrix}.
\]

We now show that every entry $b_j$ ($j \in [s_\phi + 1]$) in $\beta$ can be rewritten as inner products between another pair of vectors in $\REAL^{s_\phi + 1}$ instead of the inner product between two vectors in $\REAL^n$. Formally, for every $j \in [s_\phi + 1]$, it holds that
%\[
%\beta_j = \hat{\A}_{j*} \begin{bmatrix}
%\sum\limits_{i = 1}^{s_\phi + 1} \alpha_i k\term{\BS_{i*}, \X_{1*}}\\
%\sum\limits_{i = 1}^{s_\phi + 1} \alpha_i k\term{\BS_{i*}, \X_{2*}}\\
%\vdots\\
%\sum\limits_{i = 1}^{s_\phi + 1} \alpha_i k\term{\BS_{i*}, \X_{n*}}
%\end{bmatrix}.
%\]

%Using simple rearrangements,% we obtain that every inner product between two vectors in $\REAL^n$ can be rewritten as inner products between another pair of vectors in $\REAL^{s_\phi + 1}$. Specifically, we obtain that 
\begin{equation*}
   % \begin{split}
\beta_j = \hat{\A}_{j*} \begin{bmatrix}
\sum\limits_{i = 1}^{s_\phi + 1} \alpha_i k\term{\BS_{i*}, \X_{1*}}\\
\sum\limits_{i = 1}^{s_\phi + 1} \alpha_i k\term{\BS_{i*}, \X_{2*}}\\
\vdots\\
\sum\limits_{i = 1}^{s_\phi + 1} \alpha_i k\term{\BS_{i*}, \X_{n*}}
\end{bmatrix} = \begin{bmatrix}
\sum\limits_{t=1}^{n} \hat{\A}_{j,t} k\term{\BS_{1*}, \X_{t*}}, \cdots,
\sum\limits_{t=1}^{n} \hat{\A}_{j,t} k\term{\BS_{\term{s_\phi + 1}*}, \X_{t*}}
\end{bmatrix} 
\begin{bmatrix} \alpha_1\\ \vdots \\ \alpha_{s_\phi+1} \end{bmatrix}.
\begin{comment}
\\
&= \begin{bmatrix}
\sum_{t=1}^{n} \hat{\A}_{j,t} k\term{\BS_{1*}, \X_{t*}},
\sum_{t=1}^{n} \hat{\A}_{j,t} k\term{\BS_{2*}, \X_{t*}},
\cdots
\sum_{t=1}^{n} \hat{\A}_{j,t} k\term{\BS_{\term{s_\phi + 1}*}, \X_{t*}}
\end{bmatrix} 
\begin{bmatrix} \alpha_1\\ \vdots \\ \alpha_{s_\phi+1} \end{bmatrix}
\end{comment}
%\end{split}
\end{equation*}

Thus, for every $j \in [s_\phi + 1]$, define $$\A_{j*} =\begin{bmatrix}
\sum\limits_{t=1}^{n} \hat{\A}_{j,t} k\term{\BS_{1*}, \X_{t*}},
%\sum_{t=1}^{n} \hat{\A}_{j,t} k\term{\BS_{2*}, \X_{t*}},
\cdots,
\sum\limits_{t=1}^{n} \hat{\A}_{j,t} k\term{\BS_{\term{s_\phi + 1}*}, \X_{t*}}
\end{bmatrix} \in \REAL^{s_\phi + 1}.$$ The right-hand side of~\eqref{eq:beta} can reformulated  as 
\begin{equation}
\label{eq:reformulate_beta}
\frac{1}{s_\phi} \RFFX^T\term{\frac{1}{s_\phi} \RFFX\RFFX^T + n\lambda \I_{n}}^{-1} \mathbf{f}_{\BS}[\X] = \A \alpha,
\end{equation}
where now we only need to solve 
$\beta = \A \alpha. $ 
Such a linear system of equations might have an infinite set of solutions due to the fact that we have $s_\phi + 1$ variables (the length of $\alpha$) and exactly $s_\phi$ equations. For simplicity, a solution to the above equality would be 
$
\alpha := \pinv{\A}\beta.
$ 

To proceed in proving~\ref{thm_guarantee:2} with all of the above ingredients, we utilize the following tool.

\begin{lemma}[Special case of Definition~6.1 from~\cite{braverman2016new}]
\label{lem:weak_tri_ineq}
Let $X$ be a set, and let $\term{X,\norm{\cdot}{}^2}$ be a $2$-metric space i.e., for every $x,y,z \in X$, 
$
\norm{x-y}{}^2  \leq 2 \term{\norm{x-z}{}^2 + \norm{y-z}{}^2}.
$
Then, for every $\eps \in (0,1)$, and $x,y,z \in X$,
\begin{equation}
\label{eq:weak_tri_ineq_eps}
\term{1-\eps} \norm{y-z}{}^2 - \frac{4}{\eps^2} \norm{x-z}{}^2\leq \norm{x-y}{}^2 \leq \frac{4}{\eps^2} \norm{x-z}{}^2 + \term{1+\eps} \norm{y-z}{}^2.
\end{equation}
%$\term{X, \norm{\cdot}{}^2}$ is a $\term{\frac{4}{\eps^2}, \eps}$-metric space if for every
%\term{1-\eps} \norm{y-z}{}^2 - \frac{4}{\eps^2} \norm{x-y}{}^2\leq \norm{x-z}{}^2 \leq \frac{4}{\eps^2} \norm{x-y}{}^2 + \term{1+\eps} \norm{y-z}{}^2.
\end{lemma}

We note that Lemma~\ref{lem:weak_tri_ineq} implies that $x,y,z \in \REAL^d$
\begin{equation}
\label{eq:modified_weak_tri_ineq}
\norm{x-y}{}^2 \leq \min_{\tau \in \Upsilon} \maxterm{\tau}{\frac{4}{\tau^2}} \norm{x-z}{}^2 + \minterm{1+\tau}{\frac{4\term{1+\tau}}{3\tau}}\norm{y-z}{}^2.  
\end{equation}
where for $\tau = 2$ we get the inequality associated with the property of $2$-metric, and for any $\tau \in (0,1)$, we obtain the inequality~\eqref{eq:weak_tri_ineq_eps}.

We thus observe that  
\begin{equation*}
\begin{split}
&\frac{1}{n} \sum\limits_{i = 1}^n \abs{\fk\term{\X_{i*}} - \fks \term{\X_{i*}}}^2\\
&\quad = \frac{1}{n} \sum\limits_{i = 1}^n \abs{\fk\term{\X_{i*}} - \fkt \term{\RFFX_{i*}} + \fkt \term{\RFFX_{i*}} -  \fks \term{\X_{i*}}}^2\\
&\quad \leq \min_{\tau \in \Upsilon} \frac{\maxterm{\tau}{\frac{4}{\tau^2}}}{n} \sum\limits_{i = 1}^n \abs{\fk\term{\X_{i*}} -\fkt\term{\RFFX_{i*}}}^2 +\\ &\, \, \, \, \quad\quad \frac{\minterm{1+\tau}{\frac{4\term{1+\tau}}{3\tau}}}{n}\sum\limits_{i=1}^n \abs{\fkt \term{\RFFX_{i*}} -  \fks \term{\X_{i*}}}^2\\
&\quad \leq \min_{\tau \in \Upsilon} 2\maxterm{\tau}{\frac{4}{\tau^2}}\lambda + 2\minterm{1+\tau}{\frac{4\term{1+\tau}}{3\tau}}\lambda \\
&= \min_{\tau \in \Upsilon}\term{2\maxterm{\tau}{\frac{4}{\tau^2}} + 2\minterm{1+\tau}{\frac{4\term{1+\tau}}{3\tau}}}\lambda ,
\end{split}
\end{equation*}
where the first equality holds by adding and subtracting the same term, the first inequality holds by Lemma~\ref{lem:weak_tri_ineq}, and the second inequality holds by combining the way $\fks$ was defined and Theorem~\ref{main:krr_emperical_risk}.

Finally, to conclude the proof of Theorem~\ref{thm:existence}, we derive~\ref{eq:bound_avg_dist_to_y}
\begin{equation}
\label{eq:using_weak_triangle_ineq}
\begin{split}
&\frac{1}{n} \sum\limits_{i = 1}^n \abs{y_i - \fks \term{\X_{i*}}}^2 = \frac{1}{n} \sum\limits_{i = 1}^n \abs{y_i - \fkt \term{\RFFX_{i*}} + \fkt \term{\RFFX_{i*}} - \fks \term{\X_{i*}}}^2\\
&\leq \min_{\tau \in \Upsilon} \frac{\minterm{1+\tau}{\frac{4\term{1+\tau}}{3\tau}}}{n} \sum\limits_{i = 1}^n \abs{y_i - \fkt \term{\RFFX_{i*}}}^2 \\
&\,\,\,\,\,\, + \frac{\maxterm{\tau}{\frac{4}{\tau^2}}}{n} \sum\limits_{i=1}^n \abs{\fkt \term{\RFFX_{i*}} - \fks \term{\X_{i*}}}^2 \\
&\leq  \min_{\tau \in \Upsilon} \frac{\minterm{1+\tau}{\frac{4\term{1+\tau}}{3\tau}}}{n}  \sum\limits_{i = 1}^n \abs{y_i - \fkt \term{\RFFX_{i*}}}^2 + 2\maxterm{\tau}{\frac{4}{\tau^2}} \lambda\\
&\leq  \min_{\tau \in \Upsilon} \frac{\minterm{1+\tau}{\frac{4\term{1+\tau}}{3\tau}}}{n} \sum\limits_{i = 1}^n \abs{y_i - \fk\term{\X_{i*}}}^2 \\ &\,\,\,\,\,\, \,\,\,\,\,\, + \term{4\minterm{1+\tau}{\frac{4\term{1+\tau}}{3\tau}} + 2\maxterm{\tau}{\frac{4}{\tau^2}}}\lambda,
\end{split}
\end{equation}
where the equality holds by adding and subtracting the same term, the first inequality holds by~\eqref{eq:modified_weak_tri_ineq}, and the second inequality follows as a result of the way $f^\lambda_{\mathbf{S}}$ was constructed and the fact that $\beta$ is its infimum based on Lemma~\ref{func_appx_opm}, and the last inequality holds by Theorem~\ref{main:krr_emperical_risk}.
\end{proof}

To simplify the bounds stated at Theorem~\ref{thm:existence}, we provide the following remark.
\begin{remark}\label{rem:main}
By fixing $\tau := 2$, the bounds in Theorem~\ref{thm:existence} become
\[
\frac{1}{n} \sum\limits_{i = 1}^n \abs{\fk\term{\X_{i*}} - \fks \term{\X_{i*}}}^2 \leq 8\lambda,
\]
and 
\[
\frac{1}{n} \sum\limits_{i = 1}^n \abs{y_i - \fks \term{\X_{i*}}}^2  \leq \frac{2}{n} \sum\limits_{i = 1}^n \abs{y_i - \fk\term{\X_{i*}}}^2 + 12\lambda.
\]

As for fixing $\tau := \eps \in (0,1)$, we obtain that 
\[
\frac{1}{n} \sum\limits_{i = 1}^n \abs{y_i - \fks \term{\X_{i*}}}^2  \leq \frac{1+\eps}{n} \sum\limits_{i = 1}^n \abs{y_i - \fk\term{\X_{i*}}}^2 + \term{4\term{1+\eps} + \frac{8}{\eps^2}}\lambda.
\]
\end{remark}

\section{Experimental Study}
\label{sec:experiments}
To validate our theoretical bounds, we performed distillation on three datasets: two synthetic datasets consisted of data generated from a Gaussian Random Field \cref{sec:exp_grf}, and classification of two clusters \cref{sec:exp_two_clusters}, and one real dataset of MNIST binary 0 vs. 1 classification \cref{sec:exp_mnist}. Full experimental details for all experiments are available in the appendix.
\subsection{2d Gaussian Random Fields}
\label{sec:exp_grf}

\begin{figure}[h]
\begin{center}
\subfloat[][]{\includegraphics[height = 1.4in]{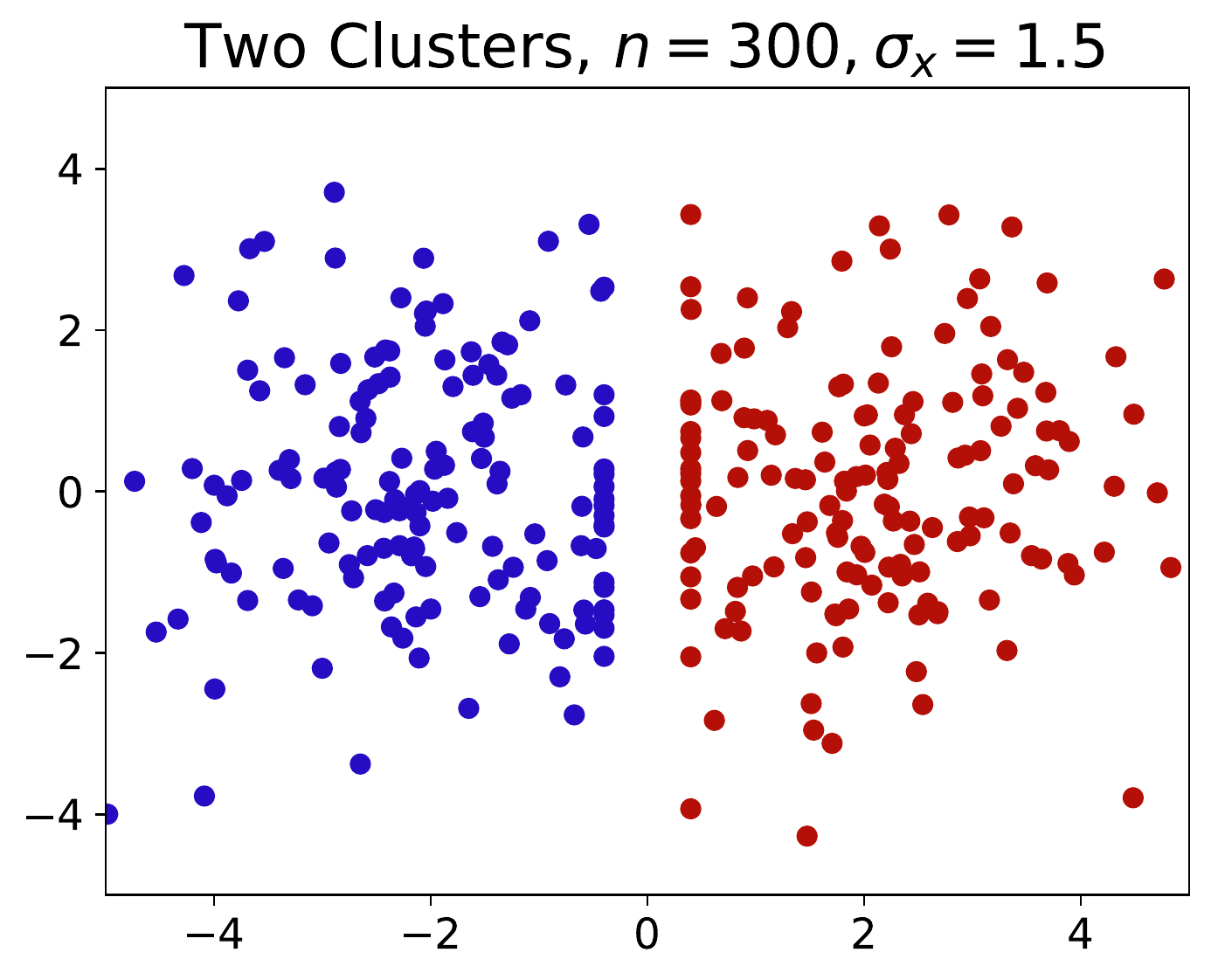}\label{fig:two_clusters_vis}}
\subfloat[][]{\includegraphics[height = 1.4in]{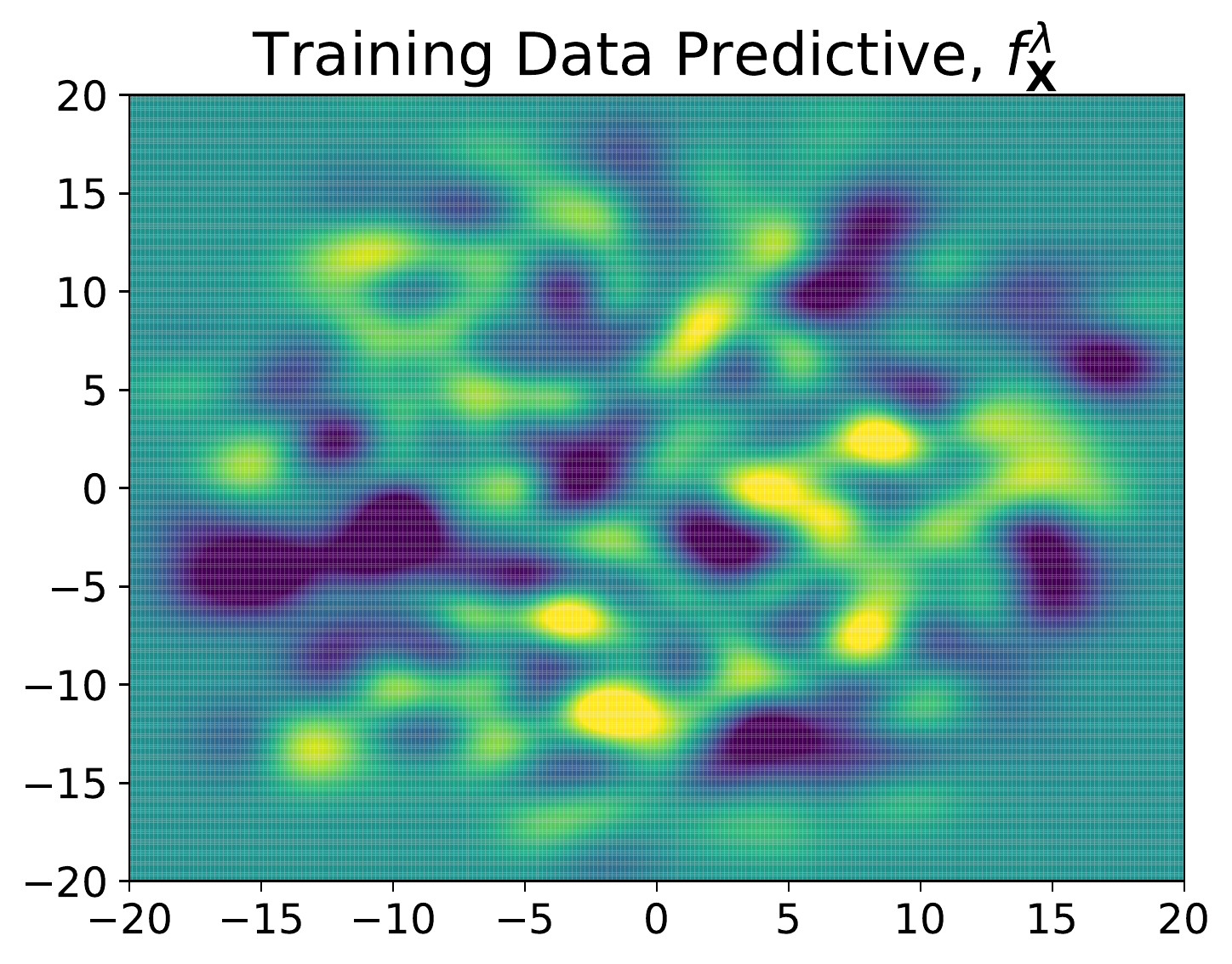}\label{fig:grf_real_predictive}}
\subfloat[][]{\includegraphics[height = 1.4in]{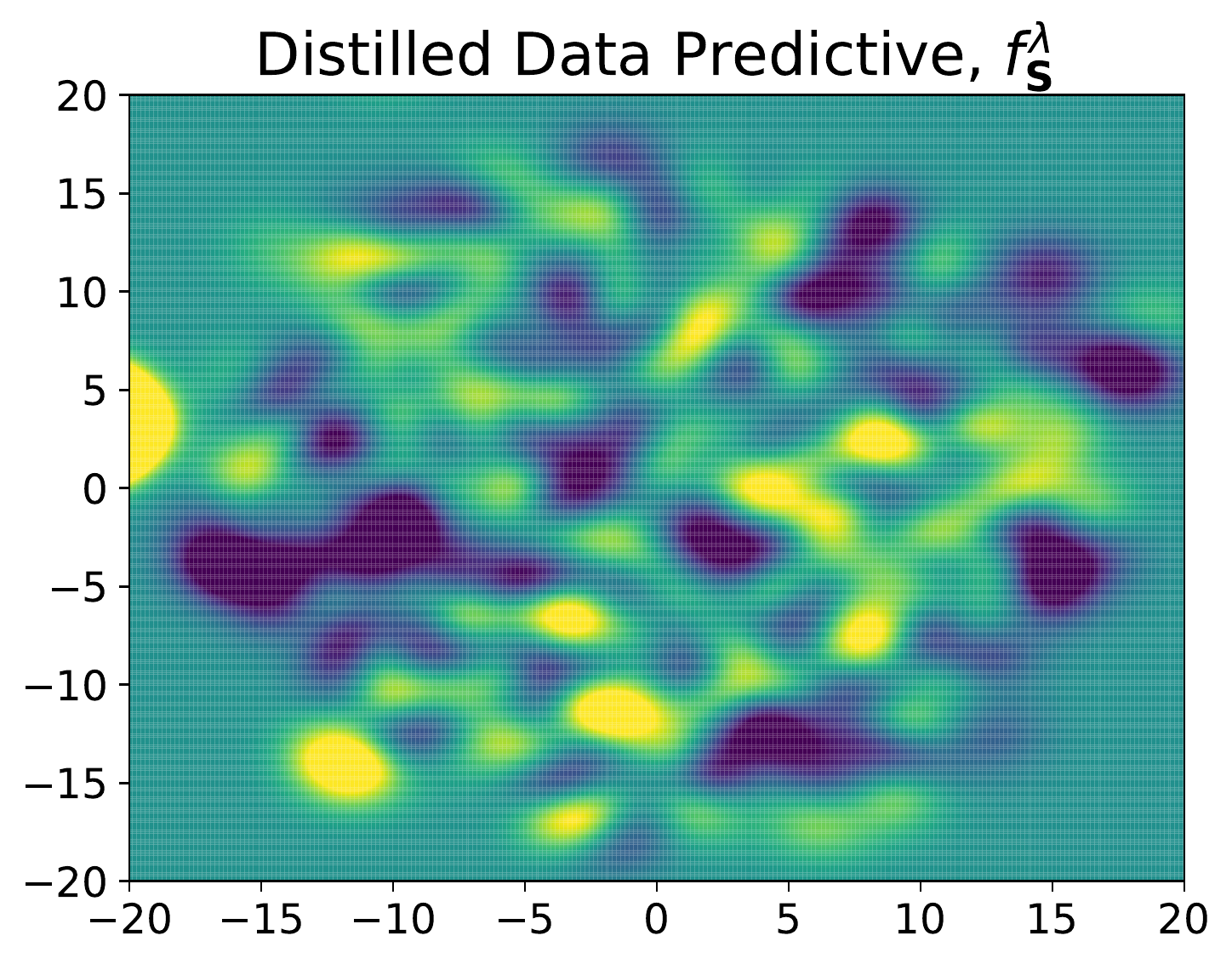}\label{fig:grf_distilled_predictive}}
\vskip -0.1in
\caption{(a) visualizes the two clusters dataset in \cref{sec:exp_two_clusters} with $n = 300$ and $\sigma_x = 1.5$. (b) and (c) visualizing the KRR predictive functions generated by the original dataset (b) and the distilled dataset (c) for the Gaussian Random Field experiment in \cref{sec:exp_grf} for $\sigma_x = 5.0$. The distilled dataset is able to capture all the nuances of the original dataset with a fraction of the datapoints.}
\label{fig:visualization_banner}
\end{center}
\vspace{-5mm}
\end{figure}

\begin{figure}[h]
\begin{center}
\includegraphics[width = 1.0\linewidth]{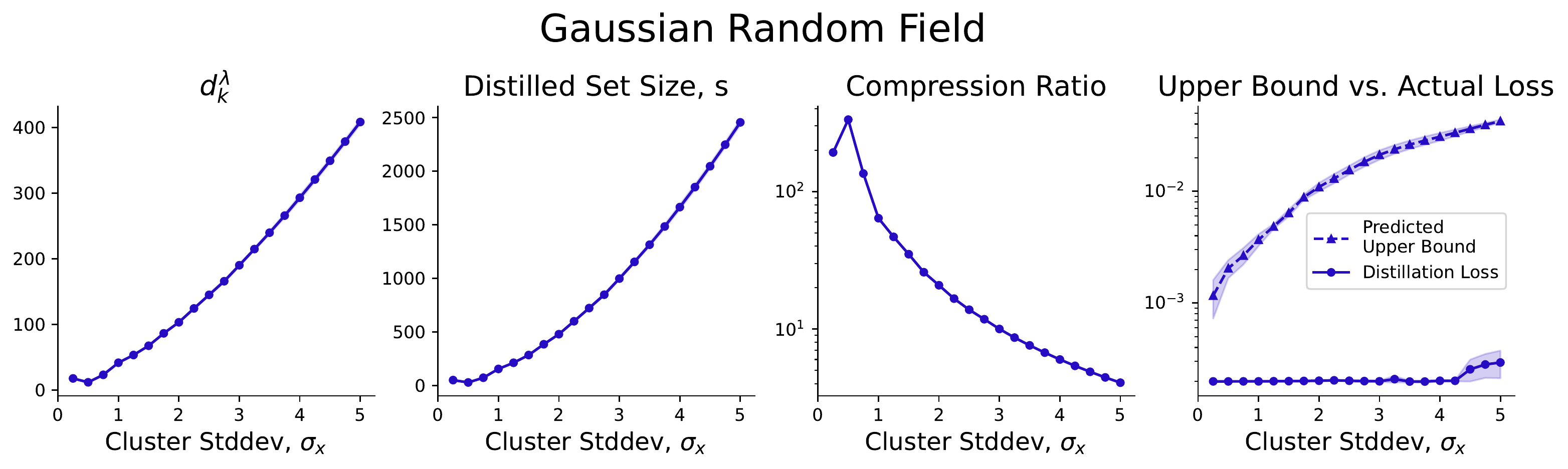}
\vskip -0.1in
\caption{Distillation results for synthetic data generated by a Gaussian Random Field (n = 3)}
\label{fig:grf}
\end{center}
\vspace{-5mm}
\end{figure}

We first test our bounds by distilling data generated from the Gaussian Process prior induced by a kernel, $k$ on 2d data. We use a squared exponential kernel with lengthscale parameter $l = 1.5$: $k(x, x') = e^{-\frac{||x-x'||^2_2}{2l^2}}$. For $\mathbf{X}$, we sample $n = 10^5$ datapoints from $\mathcal{N}(0, \sigma^2_{x})$, with $\sigma_x \in [0.25, 5.0]$. We then sample $y \sim \mathcal{N}(0, K_{XX} + \sigma_y^2 I_n)$, $\sigma_y = 0.01$. We fix $\lambda = 10^{-5}$ and distill down to $s = d_k^\lambda \log d_k^\lambda$. The resulting values of $d_k^\lambda$, $s$, and compression ratios are plotted in \cref{fig:grf}. We additionally plot the predicted upper bound given by Remark~\ref{rem:main} and the actual distillation loss. Our predicted upper bound accurately bounds the actual distillation loss. To better visualize how distillation affects the resulting KRR prediction, we show the KRR predictive function $f^\lambda_{\mathbf{X}}$ and the distilled predictive $f^\lambda_{\mathbf{S}}$ for $\sigma_x = 5.0$ in \cref{fig:grf_real_predictive} and \cref{fig:grf_distilled_predictive}.

\subsection{Two Gaussian Clusters Classification}
\label{sec:exp_two_clusters}

\begin{figure}[h]
\begin{center}
\includegraphics[width = 1.0\linewidth]{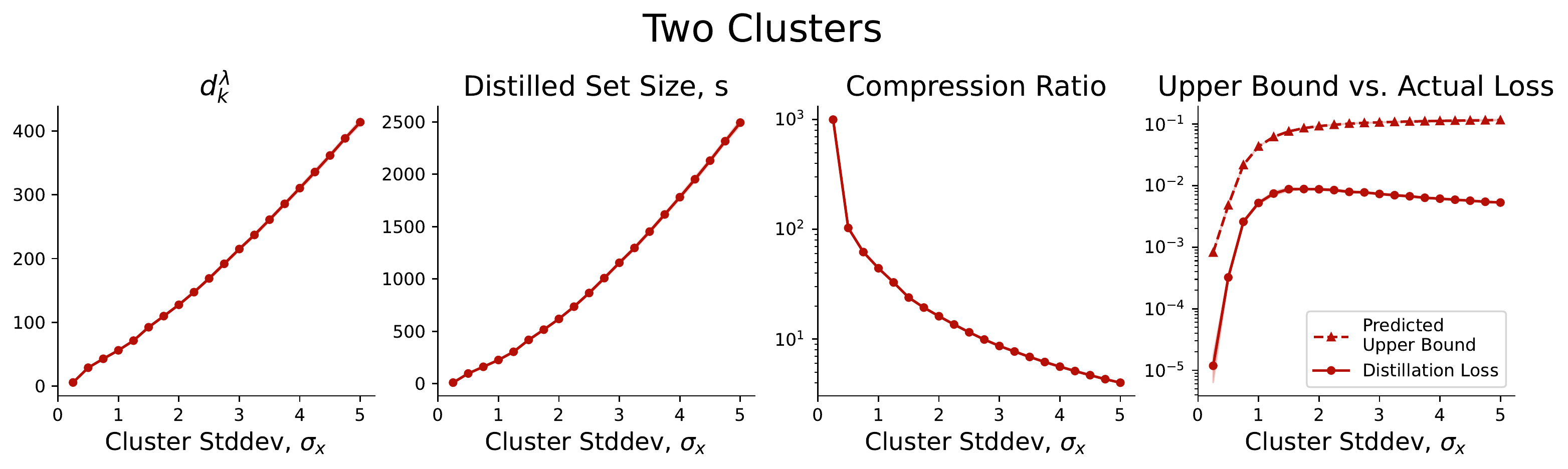}
\vskip -0.1in
\caption{Distillation results for synthetic data of two Gaussian clusters (n = 3)}
\label{fig:two_clusters}
\end{center}
\vspace{-5mm}
\end{figure}

Our second synthetic dataset is one consisting of two Gaussian clusters centered at $(-2, 0)$ and $(2, 0)$, with labels $-1$ and $+1$, respectively. Each cluster contains 5000 datapoints so that $n = 10^5$. Each cluster as standard deviation $\sigma_x \in [0.25, 5.0]$. Additionally, two allow the dataset to be easily classified, we clip the $x$ coordinates of clusters 1 and clusters 2 to not exceed/drop below $-0.4$ and $0.4$, for the two clusters, respectively. This results in a margin between the two classes. We visualize the dataset for $n = 300$ and $\sigma = 1.5$ in \cref{fig:two_clusters_vis}. We use the same squared exponential kernel as in \cref{sec:exp_grf} with $l = 1.5$, fix $\lambda = 10^{-5}$ and distill with the same protocol as in \cref{sec:exp_grf}. We likewise plot $d_k^\lambda$, $s$, and compression ratios and distillation losses in \cref{fig:two_clusters}, again with our bound accurately containing the true distillation loss.

\subsection{MNIST Binary Classification}
\label{sec:exp_mnist}

\begin{figure}[h]
\begin{center}
\includegraphics[width = 1.0\linewidth]{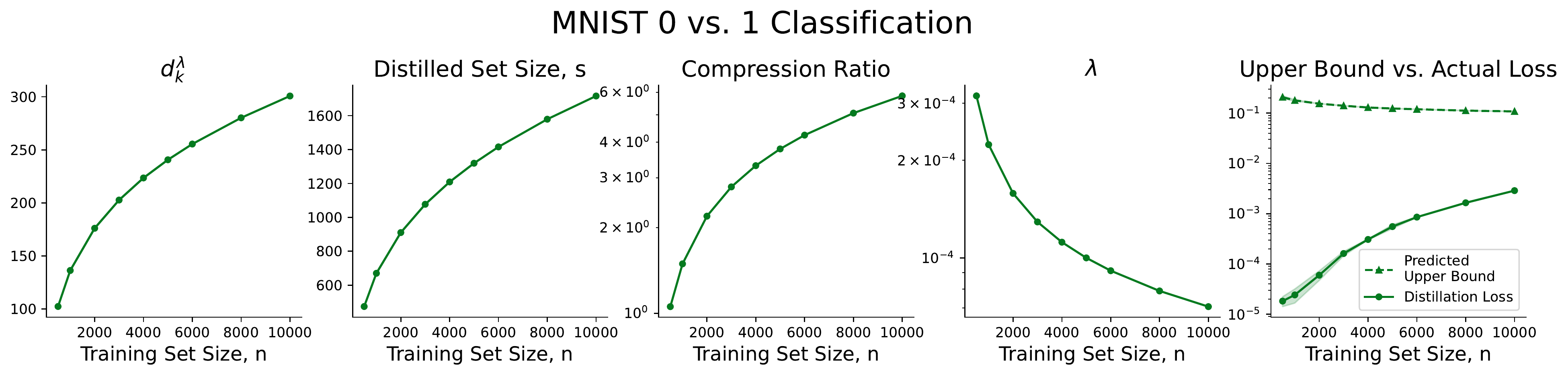}
\vskip -0.1in
\caption{Distillation results for MNIST binary 0 vs. 1 classification (n = 3)}
\label{fig:mnist}
\end{center}
\vspace{-5mm}
\end{figure}

For our final dataset, we consider binary classification on MNIST 0 and 1 digits, with labels $-1$ and $+1$, respectively. We use the same squared-exponential kernel with $l = 13.9$, which was chosen to maximize the marginal-log-likelihood, treating the problem as Gaussian Process regression. We vary $n \in [500, 10000]$, with an equal class split, and perform the same distillation protocol as in \cref{sec:exp_grf}. Here, we additionally scale $\lambda \propto \frac{1}{\sqrt{n}}$ such that $\lambda = 10^{-4}$ when $n = 5000$. Distilling yields \cref{fig:mnist}, showing that our bounds can accurately predict distillation losses for real-world datasets.

\begin{table}[t]
	\centering
 \caption{Table~1 from~\cite{li2019towards}. The trade-off in the worst case for the squared error loss.}
	\begin{tabular}{l|c|c}
		\hline
		
		\hline
		
		\textsc{sampling scheme}&\textsc{spectrum} & \textsc{number of features}\\\cline{1-3}
		
		\multirow{4}{*}{\textsc{weighted rff}} & finite rank & $s_\phi \in \Omega(1)$\\\cline{2-3}
		
		&$\lambda_i \propto A^{i}$ & $s_\phi \in \Omega (\log n \cdot \log \log n)$  	\\\cline{2-3}
		
		&$\lambda_i \propto i^{-2t} $ ($t\geq 1$) & $s_\phi \in \Omega (n^{1/2t} \cdot \log n)$  	\\\cline{2-3}
		
		&$\lambda_i \propto i^{-1}$ & $s_\phi \in \Omega (\sqrt{n} \cdot \log n)$  	\\\cline{2-3}
		
		\hline
		
		\hline
		\multirow{4}{*}{\textsc{plain rff}} & finite rank & $s_\phi \in \Omega(\sqrt{n})$\\\cline{2-3}
		
		&$\lambda_i \propto A^{i}$ & $s_\phi \in \Omega (\sqrt{n} \cdot \log \log n)$  	\\\cline{2-3}
		
		&$\lambda_i \propto i^{-2t} $ ($t\geq 1$) & $s_\phi \in \Omega (\sqrt{n} \cdot \log n)$	\\\cline{2-3}
		
		&$\lambda_i \propto  i^{-1}$ & $s_\phi \in \Omega (\sqrt{n} \cdot \log n)$  	\\\cline{2-3}
		
		\hline
		
		\hline
	\end{tabular}
	\label{tab:squ-wor}
\end{table}

\section{Conclusion}
In this study, we adopt a theoretical perspective to provide bounds on the (sufficient) size and approximation error of distilled datasets.  By leveraging the concept of random Fourier features (RFF), we prove the existence of small distilled datasets and we bound their corresponding excess risk when using shift-invariant kernels. Our findings indicate that the size of the guaranteed distilled data is a function of the "number of effective degrees of freedom," which relies on factors like the kernel, the number of points, and the chosen regularization parameter, $\lambda$, which also controls the excess risk.

In particular, we demonstrate the existence of a small subset of instances within the original input space, where the solution in the RFF space coincides with the solution found using the input data in the RFF space. Subsequently, we show that this distilled subset of instances can be utilized to generate a KRR solution that approximates the KRR solution obtained from the complete input data. To validate these findings, we conducted empirical examinations on both synthetic and real-world datasets supporting our claim.
%The size of this subset is proportional to the dimension of the RFF space of the input set, or alternatively, nearly proportional to the number of effective degrees of freedom.

While this study provides a vital first step in understanding the theoretical limitations of dataset distillation, the proposed bounds are not tight, as seen by the gap between the theoretical upper bound and the empirical distillation loss in \cref{sec:experiments}. Future work could look at closing this gap, as well as better understanding the tradeoff between distillation size and relative error.

\section*{acknowledgements}
This research has been funded in part by the Office of Naval Research Grant Number Grant N00014-18-1-2830, DSTA Singapore, and the J. P. Morgan AI Research program.

\bibliographystyle{alpha}
\bibliography{main}

\newcommand{\etalchar}[1]{$^{#1}$}
\begin{thebibliography}{BRVDW19}

\bibitem[AKM{\etalchar{+}}17]{avron2017random}
Haim Avron, Michael Kapralov, Cameron Musco, Christopher Musco, Ameya
  Velingker, and Amir Zandieh.
\newblock Random fourier features for kernel ridge regression: Approximation
  bounds and statistical guarantees.
\newblock In {\em International conference on machine learning}, pages
  253--262. PMLR, 2017.

\bibitem[BFL{\etalchar{+}}16]{braverman2016new}
Vladimir Braverman, Dan Feldman, Harry Lang, Adiel Statman, and Samson Zhou.
\newblock New frameworks for offline and streaming coreset constructions.
\newblock {\em arXiv preprint arXiv:1612.00889}, 2016.

\bibitem[BLHK16]{BachemLHK16}
Olivier Bachem, Mario Lucic, S.~Hamed Hassani, and Andreas Krause.
\newblock Approximate k-means++ in sublinear time.
\newblock In Dale Schuurmans and Michael~P. Wellman, editors, {\em Proceedings
  of the Thirtieth {AAAI} Conference on Artificial Intelligence, February
  12-17, 2016, Phoenix, Arizona, {USA}}, pages 1459--1467. {AAAI} Press, 2016.

\bibitem[BMK20]{borsos2020coresets}
Zal{\'a}n Borsos, Mojmir Mutny, and Andreas Krause.
\newblock Coresets via bilevel optimization for continual learning and
  streaming.
\newblock In {\em Proceedings of the Advances in Neural Information Processing
  Systems (NeurIPS)}, volume~33, pages 14879--14890, 2020.

\bibitem[BRVDW19]{svgp_convergence}
David Burt, Carl~Edward Rasmussen, and Mark Van Der~Wilk.
\newblock Rates of convergence for sparse variational {G}aussian process
  regression.
\newblock In Kamalika Chaudhuri and Ruslan Salakhutdinov, editors, {\em
  Proceedings of the 36th International Conference on Machine Learning},
  volume~97 of {\em Proceedings of Machine Learning Research}, pages 862--871.
  PMLR, 09--15 Jun 2019.

\bibitem[CKF22]{chen2022privateset}
Dingfan Chen, Raouf Kerkouche, and Mario Fritz.
\newblock Private set generation with discriminative information.
\newblock In {\em Neural Information Processing Systems (NeurIPS)}, 2022.

\bibitem[CWT{\etalchar{+}}22]{mtt}
George Cazenavette, Tongzhou Wang, Antonio Torralba, Alexei~A. Efros, and
  Jun-Yan Zhu.
\newblock Dataset distillation by matching training trajectories.
\newblock In {\em Proceedings of the IEEE/CVF Conference on Computer Vision and
  Pattern Recognition}, 2022.

\bibitem[FL11]{feldman2011unified}
Dan Feldman and Michael Langberg.
\newblock A unified framework for approximating and clustering data.
\newblock In {\em Proceedings of the forty-third annual ACM symposium on Theory
  of computing}, pages 569--578, 2011.

\bibitem[HCB16]{HugginsCB16}
Jonathan~H. Huggins, Trevor Campbell, and Tamara Broderick.
\newblock Coresets for scalable bayesian logistic regression.
\newblock In Daniel~D. Lee, Masashi Sugiyama, Ulrike von Luxburg, Isabelle
  Guyon, and Roman Garnett, editors, {\em Proceedings of the Advances in Neural
  Information Processing Systems (NeurIPS)}, pages 4080--4088, 2016.

\bibitem[JGH18]{jacot2018neural}
Arthur Jacot, Franck Gabriel, and Cl{\'e}ment Hongler.
\newblock Neural tangent kernel: Convergence and generalization in neural
  networks.
\newblock {\em Advances in neural information processing systems}, 31, 2018.

\bibitem[JMF19]{jubran2019introduction}
Ibrahim Jubran, Alaa Maalouf, and Dan Feldman.
\newblock Introduction to coresets: Accurate coresets.
\newblock {\em arXiv preprint arXiv:1910.08707}, 2019.

\bibitem[JSS{\etalchar{+}}20]{jacot2020implicit}
Arthur Jacot, Berfin Simsek, Francesco Spadaro, Cl{\'e}ment Hongler, and Franck
  Gabriel.
\newblock Implicit regularization of random feature models.
\newblock In {\em International Conference on Machine Learning}, pages
  4631--4640. PMLR, 2020.

\bibitem[JTMF20]{jubran2020sets}
Ibrahim Jubran, Murad Tukan, Alaa Maalouf, and Dan Feldman.
\newblock Sets clustering.
\newblock In {\em International Conference on Machine Learning}, pages
  4994--5005. PMLR, 2020.

\bibitem[LBK16]{lucic2016strong}
Mario Lucic, Olivier Bachem, and Andreas Krause.
\newblock Strong coresets for hard and soft bregman clustering with
  applications to exponential family mixtures.
\newblock In {\em Artificial intelligence and statistics}, pages 1--9. PMLR,
  2016.

\bibitem[LHAR22a]{loo2022efficient}
Noel Loo, Ramin Hasani, Alexander Amini, and Daniela Rus.
\newblock Efficient dataset distillation using random feature approximation.
\newblock {\em arXiv preprint arXiv:2210.12067}, 2022.

\bibitem[LHAR22b]{loo2022evolution}
Noel Loo, Ramin Hasani, Alexander Amini, and Daniela Rus.
\newblock Evolution of neural tangent kernels under benign and adversarial
  training.
\newblock {\em arXiv preprint arXiv:2210.12030}, 2022.

\bibitem[LHLR23]{RCIG}
Noel Loo, Ramin Hasani, Mathias Lechner, and Daniela Rus.
\newblock Dataset distillation with convexified implicit gradients, 2023.

\bibitem[LTOS19]{li2019towards}
Zhu Li, Jean-Francois Ton, Dino Oglic, and Dino Sejdinovic.
\newblock Towards a unified analysis of random fourier features.
\newblock In {\em International conference on machine learning}, pages
  3905--3914. PMLR, 2019.

\bibitem[LTOS21]{li2021towards}
Zhu Li, Jean-Francois Ton, Dino Oglic, and Dino Sejdinovic.
\newblock Towards a unified analysis of random fourier features.
\newblock {\em The Journal of Machine Learning Research}, 22(1):4887--4937,
  2021.

\bibitem[MBL20]{MirzasoleimanBL20}
Baharan Mirzasoleiman, Jeff~A. Bilmes, and Jure Leskovec.
\newblock Coresets for data-efficient training of machine learning models.
\newblock In {\em Proceedings of the 37th International Conference on Machine
  Learning, {ICML} 2020, 13-18 July 2020, Virtual Event}, volume 119 of {\em
  Proceedings of Machine Learning Research}, pages 6950--6960. {PMLR}, 2020.

\bibitem[MEM{\etalchar{+}}22]{maalouf2022unified}
Alaa Maalouf, Gilad Eini, Ben Mussay, Dan Feldman, and Margarita Osadchy.
\newblock A unified approach to coreset learning.
\newblock {\em IEEE Transactions on Neural Networks and Learning Systems},
  2022.

\bibitem[MJF19]{maalouf2019fast}
Alaa Maalouf, Ibrahim Jubran, and Dan Feldman.
\newblock Fast and accurate least-mean-squares solvers.
\newblock {\em Advances in Neural Information Processing Systems}, 32, 2019.

\bibitem[MJTF21]{maalouf2021average}
Alaa Maalouf, Ibrahim Jubran, Murad Tukan, and Dan Feldman.
\newblock Coresets for the average case error for finite query sets.
\newblock {\em Sensors}, 21(19):6689, 2021.

\bibitem[MSF20]{maalouf2020tight}
Alaa Maalouf, Adiel Statman, and Dan Feldman.
\newblock Tight sensitivity bounds for smaller coresets.
\newblock In {\em Proceedings of the 26th ACM SIGKDD international conference
  on knowledge discovery \& data mining}, pages 2051--2061, 2020.

\bibitem[MSSW18]{munteanu2018coresets}
Alexander Munteanu, Chris Schwiegelshohn, Christian Sohler, and David Woodruff.
\newblock On coresets for logistic regression.
\newblock {\em Advances in Neural Information Processing Systems}, 31, 2018.

\bibitem[MTP{\etalchar{+}}22]{maalouf2022sine}
Alaa Maalouf, Murad Tukan, Eric Price, Daniel~M Kane, and Dan Feldman.
\newblock Coresets for data discretization and sine wave fitting.
\newblock In {\em International Conference on Artificial Intelligence and
  Statistics}, pages 10622--10639. PMLR, 2022.

\bibitem[Mur12]{murphy2012machine}
Kevin~P Murphy.
\newblock {\em Machine learning: a probabilistic perspective}.
\newblock MIT press, 2012.

\bibitem[NCL20]{nguyen2020dataset}
Timothy Nguyen, Zhourong Chen, and Jaehoon Lee.
\newblock Dataset meta-learning from kernel ridge-regression.
\newblock {\em arXiv preprint arXiv:2011.00050}, 2020.

\bibitem[NCL21]{KIP1}
Timothy Nguyen, Zhourong Chen, and Jaehoon Lee.
\newblock Dataset meta-learning from kernel ridge-regression.
\newblock In {\em International Conference on Learning Representations}, 2021.

\bibitem[NNXL21]{KIP2}
Timothy Nguyen, Roman Novak, Lechao Xiao, and Jaehoon Lee.
\newblock Dataset distillation with infinitely wide convolutional networks.
\newblock In {\em Thirty-Fifth Conference on Neural Information Processing
  Systems}, 2021.

\bibitem[PDM22]{PooladzandiDM22}
Omead Pooladzandi, David Davini, and Baharan Mirzasoleiman.
\newblock Adaptive second order coresets for data-efficient machine learning.
\newblock In Kamalika Chaudhuri, Stefanie Jegelka, Le~Song, Csaba
  Szepesv{\'{a}}ri, Gang Niu, and Sivan Sabato, editors, {\em International
  Conference on Machine Learning, {ICML} 2022, 17-23 July 2022, Baltimore,
  Maryland, {USA}}, volume 162 of {\em Proceedings of Machine Learning
  Research}, pages 17848--17869. {PMLR}, 2022.

\bibitem[RR07]{rahimi2007random}
Ali Rahimi and Benjamin Recht.
\newblock Random features for large-scale kernel machines.
\newblock {\em Advances in neural information processing systems}, 20, 2007.

\bibitem[RR17]{rudi2017generalization}
Alessandro Rudi and Lorenzo Rosasco.
\newblock Generalization properties of learning with random features.
\newblock {\em Advances in neural information processing systems}, 30, 2017.

\bibitem[SCCB22]{continual_dd}
Mattia Sangermano, Antonio Carta, Andrea Cossu, and Davide Bacciu.
\newblock Sample condensation in online continual learning, 2022.

\bibitem[SG05]{fitc}
Edward Snelson and Zoubin Ghahramani.
\newblock Sparse gaussian processes using pseudo-inputs.
\newblock In Y.~Weiss, B.~Sch\"{o}lkopf, and J.~Platt, editors, {\em Advances
  in Neural Information Processing Systems}, volume~18. MIT Press, 2005.

\bibitem[SRL{\etalchar{+}}19]{NAS_with_DD}
Felipe~Petroski Such, Aditya Rawal, Joel Lehman, Kenneth~O. Stanley, and Jeff
  Clune.
\newblock Generative teaching networks: Accelerating neural architecture search
  by learning to generate synthetic training data.
\newblock {\em CoRR}, abs/1912.07768, 2019.

\bibitem[SS15]{sutherland2015error}
Danica~J Sutherland and Jeff Schneider.
\newblock On the error of random fourier features.
\newblock {\em arXiv preprint arXiv:1506.02785}, 2015.

\bibitem[TBFR21]{tukan2021svms}
Murad Tukan, Cenk Baykal, Dan Feldman, and Daniela Rus.
\newblock On coresets for support vector machines.
\newblock {\em Theoretical Computer Science}, 890:171--191, 2021.

\bibitem[TMF20]{TukanMF20}
Murad Tukan, Alaa Maalouf, and Dan Feldman.
\newblock Coresets for near-convex functions.
\newblock In Hugo Larochelle, Marc'Aurelio Ranzato, Raia Hadsell,
  Maria{-}Florina Balcan, and Hsuan{-}Tien Lin, editors, {\em Proceedings of
  the Advances in Neural Information Processing Systems (NeurIPS)}, 2020.

\bibitem[TRB16]{tran2016variational}
Dustin Tran, Rajesh Ranganath, and David~M. Blei.
\newblock The variational gaussian process, 2016.

\bibitem[TWZ{\etalchar{+}}22]{tukan2022projective}
Murad Tukan, Xuan Wu, Samson Zhou, Vladimir Braverman, and Dan Feldman.
\newblock New coresets for projective clustering and applications.
\newblock In {\em International Conference on Artificial Intelligence and
  Statistics}, pages 5391--5415. PMLR, 2022.

\bibitem[TZM{\etalchar{+}}23]{tukan2023provable}
Murad Tukan, Samson Zhou, Alaa Maalouf, Daniela Rus, Vladimir Braverman, and
  Dan Feldman.
\newblock Provable data subset selection for efficient neural network training.
\newblock {\em arXiv preprint arXiv:2303.05151}, 2023.

\bibitem[WZTE18]{wang2018dataset}
Tongzhou Wang, Jun-Yan Zhu, Antonio Torralba, and Alexei~A Efros.
\newblock Dataset distillation.
\newblock {\em arXiv preprint arXiv:1811.10959}, 2018.

\bibitem[ZB21]{zhao2021DSA}
Bo~Zhao and Hakan Bilen.
\newblock Dataset condensation with differentiable siamese augmentation.
\newblock In {\em International Conference on Machine Learning}, 2021.

\bibitem[ZMB21]{zhao2021DC}
Bo~Zhao, Konda~Reddy Mopuri, and Hakan Bilen.
\newblock Dataset condensation with gradient matching.
\newblock In {\em International Conference on Learning Representations}, 2021.

\bibitem[ZNB22]{frepo}
Yongchao Zhou, Ehsan Nezhadarya, and Jimmy Ba.
\newblock Dataset distillation using neural feature regression.
\newblock In {\em Proceedings of the Advances in Neural Information Processing
  Systems (NeurIPS)}, 2022.

\end{thebibliography}
\newpage
\appendix

\section{Experiment Details}
All experiments unless otherwise stated present the average/standard deviations of $n = 3$ runs. Each run consists of a random subset of MNIST 0/1 digits for MNIST binary classification, or random positions of sampled datapoints for synthetic data, and different samples from the GP for the Gaussian Random Field experiment. Distilled datasets are initialized as subsets of the original training data. We distill for 20000 iterations with Adam optimizer with a learning rate of $0.002$ optimizing both images/data positions and labels. We use full batch gradient descent for the synthetic datasets and a maximum batch size of 2000 for the MNIST experiment. For the MNIST experiment we found that particularly for larger values of $n$, with minibatch training, we could obtain lower distillation losses by optimizing for longer, so the closing of the gap between the upper bound and experiment values in \cref{fig:mnist} may be misleading: longer optimization could bring the actual distillation loss lower.

To ensure that assumption \ref{assumption:2} is fulfilled, we scale the labels such that $\norm{\fk}{\mathcal{H}} = 1$. For example, if we are working with MNIST binary classification, with labels $\{+1, -1\}$, we first compute $\norm{\fk}{\mathcal{H}} = r$ using $\{+1, -1\}$ labels, then rescale the labels by $1/r$ so that the labels are $\{+\frac{1}{r}, -\frac{1}{r}\}$. Suppose this results in some upper bound $\mathcal{L}_U$ and some real distillation loss $\mathcal{L}_R$. For the corresponding plots in \cref{fig:grf,fig:two_clusters,fig:mnist}, we plot $r^2\mathcal{L}_U$ and $r^2\mathcal{L}_R$. We do this because the $r$ values for different parameters (such as $n$ or $\sigma_x$) could be different, and scaling for the plots allows the values to be comparable.

In the figures for the upper bounds on the distillation loss we plot the smallest value of the upper bounds in \cref{rem:main}.

\end{document}